\documentclass[11pt,letterpaper]{article}

\usepackage{amsmath,amssymb,amsthm}
\usepackage{mathtools}
\usepackage{hyperref}
\usepackage[margin=1in]{geometry}
\usepackage{booktabs}
\usepackage{graphicx}
\usepackage{xcolor}
\usepackage{lineno}

\newtheorem{theorem}{Theorem}[section]
\newtheorem{proposition}[theorem]{Proposition}
\theoremstyle{definition}

\theoremstyle{remark}

\DeclareMathOperator{\rank}{rank}
\DeclareMathOperator{\softmax}{softmax}
\DeclareMathOperator{\MAD}{MAD}
\DeclareMathOperator{\Var}{Var}
\newcommand{\R}{\mathbb{R}}
\newcommand{\E}{\mathbb{E}}

\title{Compressible Softmax-Attended Language\\
under Incompressible Attention}
\author{
Wonsuk Lee\\
Seoul National University\ \& SK Hynix America\\
\texttt{\small wonsuk.lee@snu.ac.kr}
}
\date{April 5, 2026}

\begin{document}
\maketitle

\begin{abstract}

Softmax attention defines an interaction through $d_h$ head                                                                             
dimensions, but not all dimensions carry equal weight once real
text passes through.  We decompose the attention logit field into                                                                          
a learned component and a generated component and measure their                                                                            
spectra separately.  For all 5,888 KV heads in five transformer                                                                            
language models (124M--7B parameters, four architecture families),
the logit energy field $\tilde{E}$ reaches 90\% of its variance                                                                             
in 2--11 singular components.  The learned interaction matrix
$W_Q^\mathrm{T} W_K$ needs 38--75 components for the same                                                                                  
threshold out of $d_h \in {64, 128}$.  The spectral gap is                                                                                 
5--25$\times$ in effective rank.  The compressibility of
softmax-attended language is a property of the data, not the                                                                               
frame that analyzes it. 

\end{abstract}

\vspace{1.0em}

\section{Introduction}
\label{sec:intro}

The key--value cache dominates the memory cost of autoregressive
transformer inference and grows linearly with context length.
Reducing it requires exploitable structure in the attention
pattern.

The row-centered attention logit energy field $\tilde{E}$ has rank
at most $d_h + 1$, where $d_h$ is the head dimension~\cite{lee2026invariants}, 
and its effective rank is much lower in practice.
The $d_h$-dimensional key cache is redundant.  
The question is where the redundancy originates.

Two possibilities.  First, the weight matrices $W_Q$ and $W_K$
impose spectral structure on the interaction, making the mechanism
itself low-rank.  A fixed projection from the model parameters
then compresses the key cache for any input.  Second, the input
data concentrates the interaction into a low-dimensional subspace
even though the weights are spectrally uniform.

Existing KV-cache compression methods assume one or the other.
Low-rank weight factorization and static head pruning treat the
redundancy as architectural.  Token eviction and attention-aware
merging treat it as input-dependent.  Neither tests the
assumption.  Comparing the two spectra answers the question and
predicts which class of methods should work.

We measure both spectra across five models in four architecture
families (GPT-2, LLaMA, Qwen, Mistral).  They differ by
$5$--$25\times$ in effective rank.  The \emph{learned} matrix
$M = W_Q^\mathrm{T} W_K$ distributes its variance uniformly across
all $d_h$ directions in every model tested.  The \emph{generated}
matrix $\tilde{E}$ concentrates almost all variance into a few.

The compression lives in the data, not the weights.  Effective key
cache compression requires data-adaptive projections that change
with every context.  The compressibility of softmax-attended
language is a property of how language activates attention, not of
how attention is built.

\section{Setup}
\label{sec:setup}

A single attention head~\cite{vaswani2017attention} maps $L$
input embeddings
$x_0, \ldots, x_{L-1} \in \R^{d_{\text{model}}}$ to queries and
keys through learned weight matrices:
\begin{equation}\label{eq:qk}
  q_i = W_Q\, x_i \in \R^{d_h}, \qquad
  k_j = W_K\, x_j \in \R^{d_h}.
\end{equation}
The attention logit is $Z_{ij} = q_i^\mathrm{T} k_j / \sqrt{d_h}$, and
the attention probability is $p_{ij} = \softmax_j(Z_{ij})$.

\subsection{The logit energy field (\emph{generated})}
\label{sec:logit-field}

Row-centering the logit matrix removes the position-dependent
baseline:
\begin{equation}\label{eq:etilde}
  \tilde{E}_{ij} = Z_{ij} - \bar{Z}_i,
  \qquad \bar{Z}_i = \frac{1}{L}\sum_{j=0}^{L-1} Z_{ij}.
\end{equation}
Since softmax is shift-invariant,
$p_{ij} = \softmax_j(\tilde{E}_{ij})$.  The matrix $\tilde{E}$
carries the same information as $Z$ for computing attention.

The row-sum identity $\sum_j \tilde{E}_{ij} = 0$ holds by
construction. Therefore $\tilde{E}\,\mathbf{1} = 0$ and $\mathbf{1}$ is
in the right null space.  Since $Z = Q K^\mathrm{T} / \sqrt{d_h}$ has rank at most $d_h$,
and row-centering adds a rank-one term,
$\rank(\tilde{E}) \le d_h + 1$.

The SVD of $\tilde{E}$, with rank $R \le d_h + 1$,
\begin{equation}\label{eq:etilde-svd}
  \tilde{E} = \sum_{k=1}^{R} \sigma_k\, u_k\, v_k^\mathrm{T},
\end{equation}
decomposes the interaction into $R$ channels ordered by
variance~\cite{golub2013matrix}.  The $k$-th channel has strength
$\sigma_k$, left singular vector $u_k \in \R^L$ over queries, and
right singular vector $v_k \in \R^L$ over keys.  If the singular
values decay rapidly, then a few channels approximate the full
matrix and the attention pattern is compressible.

The right singular vectors satisfy $v_k \perp \mathbf{1}$.  
This follows from $\tilde{E}\,\mathbf{1} = 0$.  The vector
$\mathbf{1}$ is an eigenvector of
$\tilde{E}^\mathrm{T} \tilde{E}$ with eigenvalue zero, while
each $v_k$ has nonzero eigenvalue $\sigma_k^2$.  Eigenvectors of
a symmetric matrix with distinct eigenvalues are orthogonal.

\subsection{The interaction matrix (\emph{learned})}
\label{sec:interaction}

The logit is a bilinear form in the input embeddings:
\begin{equation}\label{eq:bilinear}
  Z_{ij} = \frac{x_i^\mathrm{T} M\, x_j}{\sqrt{d_h}},
  \qquad \text{where} \quad M = W_Q^\mathrm{T} W_K
    \in \R^{d_{\text{model}} \times d_{\text{model}}}.
\end{equation}
The matrix $M$ factors through $\R^{d_h}$
($W_Q \in \R^{d_h \times d_{\text{model}}}$,
$W_K \in \R^{d_h \times d_{\text{model}}}$), so
$\rank(M) \le d_h$.  Its SVD,
\begin{equation}\label{eq:M-svd}
  M = \sum_{k=1}^{d_h} \lambda_k\, p_k\, r_k^\mathrm{T},
\end{equation}
decomposes the mechanism's capacity into $d_h$ channels. 
The singular values $\lambda_k$ and their decay rate measure 
how much spectral structure the weights alone impose.

The generated spectrum reflects both the weights and the input.
The learned spectrum reflects the weights alone.  Comparing the
two reveals whether the compression is architectural or
input-driven.

\section{Experiments}
\label{sec:experiments}

We computed the singular value spectra of $\tilde{E}$ and $M$ for
every attention head in five models spanning four architecture
families and two head dimensions.

\paragraph{Models.}
GPT-2 (124M, $d_h\!=\!64$, 144 heads).
LLaMA-3.2-1B (1.2B, $d_h\!=\!64$, 512 heads).
LLaMA-3.2-3B (3.2B, $d_h\!=\!128$, 672 heads).
Qwen-2.5-3B (3B, $d_h\!=\!128$, 576 heads).
Mistral-7B (7B, $d_h\!=\!128$, 1024 heads).
The three larger models use grouped query attention, where
multiple query heads share a single key-value head.

\vspace{-0.6em}

\paragraph{Protocol.}
For $M$, we compute singular values directly from the weight
matrices via QR factorization.  For $\tilde{E}$, we extract $Q$
and $K$ (with RoPE applied where applicable) on five texts from
the Gutenberg Project~\cite{gutenberg} (Dickens, Darwin,
Shakespeare, the King James Bible, and Adam Smith) at $L = 256$
and compute $Z = Q K^\mathrm{T} / \sqrt{d_h}$.  All SVDs use
float64 arithmetic.  Tables report the median across all heads
pooled over the five texts.  The effective rank at 90\% varies by
at most $\pm 1$ across texts (standard deviation $\le 1.2$).

\vspace{-0.6em}

\paragraph{Results.}
Table~\ref{tab:cumvar} compares the cumulative fraction of
variance ($\sum_{k=1}^r \sigma_k^2 / \|\tilde{E}\|_F^2$)
captured by the top $r$ components. 
Table~\ref{tab:effrank} translates these fractions into effective
rank.

\vspace{-0.5em}

\begin{table}[h!]
\centering
\caption{Cumulative variance captured by the top $r$ singular
components (median across all heads).  $\tilde{E}$ concentrates
its variance in a few components. $W_Q^\mathrm{T} W_K$ does not.
Models are grouped by head dimension: $d_h = 64$ (left) and
$d_h = 128$ (right).}
\label{tab:cumvar}
\small
\begin{tabular}{r cc cc cc cc cc}
\addlinespace[1.0em]
\toprule
& \multicolumn{2}{c}{GPT-2}
& \multicolumn{2}{c}{LLaMA-1B}
& \multicolumn{2}{c}{LLaMA-3B}
& \multicolumn{2}{c}{Qwen-3B}
& \multicolumn{2}{c}{Mistral-7B} \\
\cmidrule(lr){2-3} \cmidrule(lr){4-5}
\cmidrule(lr){6-7} \cmidrule(lr){8-9} \cmidrule(lr){10-11}
$r$ & $M$ & $\tilde{E}$
    & $M$ & $\tilde{E}$
    & $M$ & $\tilde{E}$
    & $M$ & $\tilde{E}$
    & $M$ & $\tilde{E}$ \\
\midrule
  1 &   5\% &  72\% &   6\% & 43\%
    &   4\% & 42\%  &   4\% & 41\%
    &   4\% & 46\% \\
  2 &  10\% &  90\% &  12\% & 63\%
    &   7\% & 61\%  &   7\% & 60\%
    &   8\% & 64\% \\
  5 &  20\% &  96\% &  25\% & 83\%
    &  15\% & 81\%  &  16\% & 80\%
    &  17\% & 82\% \\
 10 &  33\% &  98\% &  42\% & 93\%
    &  25\% & 90\%  &  27\% & 90\%
    &  29\% & 91\% \\
 20 &  53\% &  99\% &  66\% & 98\%
    &  42\% & 96\%  &  45\% & 96\%
    &  47\% & 96\% \\
 40 &  81\% & 100\% &  92\% & 100\%
    &  66\% & 99\%  &  70\% & 99\%
    &  72\% & 99\% \\
\bottomrule
\end{tabular}
\end{table}
\vspace{-0.2em}

\begin{table}[h!]
\centering
\caption{Effective rank: number of singular components for the
stated variance threshold (median across all heads).  The spectral
gap between $M$ and $\tilde{E}$ persists across all five models.}
\label{tab:effrank}
\small
\begin{tabular}{r cc cc cc cc cc}
\addlinespace[1.0em]
\toprule
& \multicolumn{2}{c}{GPT-2}
& \multicolumn{2}{c}{LLaMA-1B}
& \multicolumn{2}{c}{LLaMA-3B}
& \multicolumn{2}{c}{Qwen-3B}
& \multicolumn{2}{c}{Mistral-7B} \\
\cmidrule(lr){2-3} \cmidrule(lr){4-5}
\cmidrule(lr){6-7} \cmidrule(lr){8-9} \cmidrule(lr){10-11}
Threshold & $M$ & $\tilde{E}$
          & $M$ & $\tilde{E}$
          & $M$ & $\tilde{E}$
          & $M$ & $\tilde{E}$
          & $M$ & $\tilde{E}$ \\
\midrule
80\% & 40 &  2 & 29 &  5 & 57 &  5 & 53 &  5 & 50 &  5 \\
90\% & 49 &  2 & 38 &  8 & 75 & 11 & 70 & 11 & 66 & 10 \\
95\% & 55 &  4 & 44 & 13 & 88 & 18 & 83 & 17 & 80 & 17 \\
99\% & 61 & 18 & 54 & 23 & 107 & 39 & 104 & 40 & 102 & 38 \\
\bottomrule
\end{tabular}
\end{table}
\vspace{0.4cm}

At 90\% variance, $\tilde{E}$ needs 2--11 components; $M$ needs
38--75.  The ratio ranges from $5\times$ (LLaMA-1B) to
$25\times$ (GPT-2).  The three $d_h = 128$ models converge to
$\tilde{E}$ effective rank 10--11 despite learned effective ranks
of 66--75, a gap of $6$--$7\times$.  The learned spectrum is flat
in every case.

\section{Conjecture}
\label{sec:conjecture}

\subsection{Incompressible attention framework}
\label{sec:incompressible}

The flat spectrum of $M$ implies that no fixed, low-rank projection
can capture most of the interaction.  Discarding any subset of the
$d_h$ directions loses a proportional fraction of the mechanism's
capacity.  The attention framework is, in spectral terms,
incompressible.

If $M$ had concentrated its spectrum, compression would be
straightforward.  Project queries and keys onto the dominant
eigenvectors, discard the rest, and reduce the head dimension from
$d_h$ to $r \ll d_h$.  The flat spectrum rules this out.  Each head
retains roughly equal capacity in all $d_h$ directions, able to
represent whichever query--key interaction the input demands.

\subsection{Compressible logit energy field}
\label{sec:compressible}

While $M$ describes what the mechanism can do, $\tilde{E}$
describes what it actually does on a specific input.  The sharp
spectral decay of $\tilde{E}$ means the query--key interaction
concentrates into a few dominant patterns for any given context.

The concentration comes from the data.  Layer normalization and
preceding transformer blocks shape the input embeddings $x_i$
into a context-dependent manifold rather than spanning all of
$\R^{d_{\text{model}}}$. Queries and keys inherit this 
low-dimensional structure. The matrix $Z = Q K^\mathrm{T} / \sqrt{d_h}$ 
has formal rank $d_h$ but much lower effective rank, 
determined by the data manifold. The empirical regularity 
$\mu_K = O(1)$ (key incoherence), measured across 16 transformer 
language models~\cite{lee2026invariants}, reflects the same geometry.  
The weights permit large $\mu_K$. 
Language does not require it. 
Training preserves $\mu_K \approx 2$.

Two properties of the data support the pattern's generality.
First, it holds across both $d_h = 64$ and $d_h = 128$.  Doubling
the head dimension roughly doubles the learned effective rank but
leaves the generated effective rank at 10--11.  Second, it is
stable across five texts from different centuries and genres
(standard deviation $\le 1.2$).

\subsection{Low-rank structure survives softmax}
\label{sec:softmax-bound}

Whether the sharp generated spectrum survives softmax is not
obvious.  A rank-$r$ approximation $\tilde{E}_r$ captures most of
the Frobenius-norm variance, but softmax is nonlinear and
amplifies residuals at peak-attention positions.

\begin{proposition}[Softmax stability of low-rank
  approximation]\label{prop:softmax}
Let $\tilde{E} \in \R^{L \times L}$ be the row-centered logit
matrix with SVD
$\tilde{E} = \sum_{k=1}^{R} \sigma_k\, u_k\, v_k^\mathrm{T}$,
and let $\tilde{E}_r = \sum_{k=1}^{r} \sigma_k\, u_k\,
v_k^\mathrm{T}$ be its rank-$r$ truncation.  Let
$p_i = \softmax(\tilde{E}_i)$ and
$p_i^{(r)} = \softmax((\tilde{E}_r)_i)$ be the attention
distributions in row~$i$.  If the singular vectors satisfy the
delocalization condition
\begin{equation}\label{eq:ipr-bound}
\max_j\, u_j^2 \le \frac{\beta}{\sqrt{L}}, \qquad
\max_j\, v_j^2 \le \frac{\beta}{\sqrt{L}}
\end{equation}
for a parameter $\beta$, then for every row~$i$:
\begin{equation}\label{eq:softmax-bound}
\bigl\|p_i - p_i^{(r)}\bigr\|_1
\;\le\;
\frac{\beta}{\sqrt{L}}\,\sum_{k=r+1}^{R} \sigma_k\,.
\end{equation}
\end{proposition}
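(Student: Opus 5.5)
The plan is to bound, row by row, the perturbation of the logits entrywise, and then push that bound through softmax using a sharp $\ell_1$ Lipschitz estimate that depends only on the oscillation of the perturbation. Fix a row $i$ and let $h = \tilde{E}_i - (\tilde{E}_r)_i \in \R^L$ be the residual row. By the SVD,
\begin{equation*}
h_j = \sum_{k=r+1}^{R} \sigma_k\, (u_k)_i\, (v_k)_j .
\end{equation*}
I read the delocalization condition~\eqref{eq:ipr-bound} as holding for every singular vector $u_k, v_k$ with $k > r$. Under that reading, $|(u_k)_i| \le \sqrt{\beta}/L^{1/4}$ and $|(v_k)_j| \le \sqrt{\beta}/L^{1/4}$. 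The triangle inequality then gives the uniform entrywise bound
\begin{equation*}
\max_j |h_j| \le \frac{\beta}{\sqrt{L}} \sum_{k=r+1}^{R} \sigma_k .
\end{equation*}
This step is routine.

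The key step is the softmax estimate
\begin{equation*}
\|\softmax(a+h) - \softmax(a)\|_1 \le \tfrac12 \bigl(\max_j h_j - \min_j h_j\bigr).
\end{equation*}
This gives the stated constant $\beta$ rather than $2\beta$, because $\max_j h_j - \min_j h_j \le 2\max_j|h_j|$. I will prove it in two stages.

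First, I bound the Jacobian. The Jacobian of softmax at a point with distribution $p$ is $J = \mathrm{diag}(p) - p p^\mathrm{T}$. Hence
\begin{equation*}
(Jh)_j = p_j (h_j - m), \qquad m = \sum_l p_l h_l .
\end{equation*}
It follows that $\|Jh\|_1 = \sum_j p_j |h_j - m|$. This is the mean absolute deviation of a random variable $X$ that takes values in $[\min h, \max h]$ with law $p$. The elementary fact $\E|X - \E X| \le (b-a)/2$ for $X \in [a,b]$ gives $\|Jh\|_1 \le \tfrac12\,\mathrm{osc}(h)$. This is the main obstacle, since a cruder bound loses the factor of 2. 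To prove the fact, I would write $\E|X-\E X| = 2\,\E(X-\E X)_+$ and bound $(X-\E X)_+ \le (b - \E X)\,\frac{X-a}{b-a}$, which holds by convexity on $[a,b]$. This yields $2(b-\mu)(\mu-a)/(b-a) \le (b-a)/2$, where $\mu = \E X$.

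Second, I integrate along the path $t \mapsto a + t h$ for $t \in [0,1]$:
\begin{equation*}
\|p(a+h) - p(a)\|_1 \le \int_0^1 \|J(a+th)\, h\|_1\, dt \le \tfrac12\, \mathrm{osc}(h).
\end{equation*}
The oscillation bound holds uniformly in the base point, so it survives the integration. Taking $a = (\tilde{E}_r)_i$ and $a+h = \tilde{E}_i$, I obtain
\begin{equation*}
\|p_i - p_i^{(r)}\|_1 \le \tfrac12\,\mathrm{osc}(h) \le \max_j |h_j| \le \frac{\beta}{\sqrt{L}} \sum_{k=r+1}^{R} \sigma_k .
\end{equation*}
This is~\eqref{eq:softmax-bound}.
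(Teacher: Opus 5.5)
Your proof is correct and follows the paper's route. You bound the residual row entrywise, $\|\Delta_i\|_\infty \le \frac{\beta}{\sqrt{L}}\sum_{k>r}\sigma_k$, using delocalization, and then push it through the $\ell_1$--$\ell_\infty$ softmax Lipschitz bound, which comes from integrating the Jacobian along the segment and bounding the mean absolute deviation by half the oscillation. The only difference is cosmetic: you prove $\E|X-\E X| \le (b-a)/2$ via $\E|X-\E X| = 2\E(X-\E X)_+$ and a chord bound, while the paper uses Jensen plus Popoviciu's variance inequality, and both give the same constant.
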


\begin{proof}
The residual $\Delta_{ij} = \sum_{k=r+1}^{R} \sigma_k\,
(u_k)_i\, (v_k)_j$ satisfies
\[
|\Delta_{ij}|
\;\le\; \sum_{k=r+1}^{R} \sigma_k\, |(u_k)_i|\, |(v_k)_j|
\;\le\; \frac{\beta}{\sqrt{L}} \sum_{k=r+1}^{R} \sigma_k
\]
by the delocalization condition~\eqref{eq:ipr-bound}, since
$|(u_k)_i|\, |(v_k)_j| \le \beta / \sqrt{L}$ for all $i, j, k$.
This bounds $\|\Delta_i\|_\infty$ uniformly over rows.
Applying Theorem~\ref{thm:softmax-lip} (Appendix~\ref{app:appendix})
with $a = \tilde{E}_i$ and $b = (\tilde{E}_r)_i$,
\[
\|p_i - p_i^{(r)}\|_1
\;=\; \|\softmax(\tilde{E}_i) - \softmax((\tilde{E}_r)_i)\|_1
\;\le\; \|\Delta_i\|_\infty
\;\le\; \frac{\beta}{\sqrt{L}}\,\sum_{k=r+1}^{R} \sigma_k\,.
\qedhere
\]
\end{proof}

The delocalization condition~\eqref{eq:ipr-bound} is verified
directly.  Measuring $\beta = \max_j v_j^2 \cdot \sqrt{L}$ across
every singular vector of every head, from five models and
five texts at $L = 256$, gives median $\beta = 2.5$--$5.6$ and
maximum $\beta \le 14$.  The value does not grow with model size
or head dimension.  The delocalization lemma of~\cite{lee2026invariants} 
shows that $\beta$ is controlled by the key incoherence $\mu_K$ 
and the condition number $\kappa(K)$, 
both of which are bounded across trained models.

The bound shrinks with both context length (through $1/\sqrt{L}$)
and rank (through the tail $\sum_{k>r} \sigma_k$).  At $r = 20$,
the tail accounts for 2\% of the total variance.  The empirical
evidence at $L = 64$--$1{,}024$ shows no growth in the tail.

Direct measurement confirms the prediction.  At $L = 256$ across
five models and five texts, the median
$\|p_i - p_i^{(r)}\|_1$ is 0.31--0.46 (mean per row) at
$r = 10$, 0.18--0.29 at $r = 20$, and 0.05--0.14 at $r = 40$.
The worst row follows the same trend: 0.90--1.31, 0.52--0.84,
and 0.16--0.39 respectively.

\subsection{Conjecture statement}
\label{sec:conjecture_statement}

We conjecture that \emph{the low effective rank of $\tilde{E}$ is
a universal property of natural language processed by trained
transformers.}  The source is the low intrinsic dimensionality of
contextualized embeddings, not any spectral constraint in the
attention weights.  Models across four architecture families
demonstrate the pattern. Whether it extends beyond language remains open.

\section{Discussion}
\label{sec:discussion}

The spectral gap between $M$ and $\tilde{E}$ has consequences
for both compression practice and architectural understanding.

The flat spectrum of $W_Q^\mathrm{T} W_K$ rules out weight-based
compression.  Fixed, input-independent projections cannot exploit
the concentration in $\tilde{E}$ because the low-rank projection
that captures nearly all of the attention variance changes with
every context.  Effective compression requires data-adaptive
methods.

This separates existing KV-cache work into two classes.
Input-independent methods (low-rank weight factorization, static
head dimension reduction) compress the mechanism, which is already
spectrally uniform.  Context-adaptive methods (eviction policies,
token merging, attention-aware pruning) can exploit the sharp data
spectrum, but they must track a projection that shifts as the
context grows during autoregressive generation.

The gap between the mechanism's full-rank frame and the data's
low-rank activation serves a purpose.  Each head allocates
uniform spectral capacity across $d_h$ dimensions, yet language
fills only a few at any given moment.  The unused dimensions
allow the same head to serve diverse
contexts across the training distribution.  Idle capacity is
flexibility.

Proposition~\ref{prop:softmax} bounds per-head, per-layer
attention fidelity.  Three quantities remain open:
\begin{enumerate}
\item The output error
  $\|o_i - o_i^{(r)}\| = \|\sum_j (p_{ij} - p_{ij}^{(r)}) v_j\|$
  depends on the value matrix $V$, not just the attention weights.
\item Errors compound across layers.  Twenty layers each
  contributing a small $\ell_1$ attention error could cancel or
  amplify depending on whether they are correlated.
\item The accumulated effect on end-to-end perplexity remains an
  open experimental question.
\end{enumerate}

The spectral gap reported here is measured on language models.
If structured sequential input drives the concentration, vision
transformers and protein models would show similar patterns. If
it is specific to natural language, other modalities would exhibit
higher effective rank in $\tilde{E}$.  Either outcome would
clarify what makes attention compressible.

\bibliographystyle{plain}

\appendix

\section{Appendix}
\label{app:appendix}

\begin{theorem}[Softmax Lipschitz bound]\label{thm:softmax-lip}
For any $a, b \in \R^k$,
\begin{equation}\label{eq:softmax-lip}
\|\softmax(a) - \softmax(b)\|_1
\;\le\; \|a - b\|_\infty.
\end{equation}
\end{theorem}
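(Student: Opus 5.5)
The plan is to prove \eqref{eq:softmax-lip} by integrating the Jacobian of softmax along the straight segment from $b$ to $a$. Then I will bound the $\ell_\infty\to\ell_1$ operator norm of that Jacobian by $1$ at every point. Write $h = a - b$ and $\gamma(t) = b + t h$ for $t\in[0,1]$. Softmax is smooth, so
\[
\softmax(a) - \softmax(b) = \int_0^1 J(\gamma(t))\, h \, dt,
\qquad J(z) = \operatorname{diag}(p) - p p^\mathrm{T},\quad p = \softmax(z).
\]
The triangle inequality for the integral gives $\|\softmax(a)-\softmax(b)\|_1 \le \sup_{t} \|J(\gamma(t))\,h\|_1$. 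It therefore suffices to show $\|J(z)h\|_1 \le \|h\|_\infty$ for every $z\in\R^k$ and every $h\in\R^k$.

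For the pointwise bound, I compute $(J(z)h)_j = p_j\,(h_j - m)$, where $m = \sum_l p_l h_l$ is the $p$-weighted mean of $h$. Hence
\[
\|J(z)h\|_1 = \sum_j p_j\,|h_j - m|,
\]
which is exactly the mean absolute deviation $\E|X - \E X|$ of a random variable $X$ that takes the value $h_j$ with probability $p_j$. The variable $X$ is supported in $[-c, c]$ with $c = \|h\|_\infty$. By Cauchy--Schwarz, $\E|X-\E X| \le \sqrt{\Var(X)}$. A variable confined to an interval of length $2c$ has variance at most $c^2$ (Popoviciu's inequality, or directly $\Var(X)\le \E X^2 \le c^2$). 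So $\|J(z)h\|_1 \le c = \|h\|_\infty$, which completes the argument.

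The main point needing care is this second step: recognizing the Jacobian action as a centered, $p$-weighted deviation, and getting the constant $1$ rather than the cruder $2$. The cruder constant comes from bounding $|h_j - m| \le 2c$ termwise. The variance route above avoids that loss. The trivial bound $\E X^2 \le c^2$ already suffices, because $\Var(X) \le \E X^2$, so no refined interval inequality is needed. Everything else is routine: smoothness of softmax justifies the fundamental theorem of calculus along $\gamma$, and the bound is uniform in $t$.
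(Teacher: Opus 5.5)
Your proof is correct and follows essentially the same route as the paper: integrate the softmax Jacobian along the segment, identify $\|J h\|_1$ as the $p$-weighted mean absolute deviation of $h$, and bound it by $\sqrt{\Var}\le\|h\|_\infty$. The only difference is cosmetic: you observe that $\Var(X)\le\E X^2\le\|h\|_\infty^2$ already suffices, whereas the paper bounds the variance via Popoviciu's inequality on the range of $h$.
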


\begin{proof}
Let $c = b - a$ and define $p^{(t)} = \softmax(a + tc)$ for
$t \in [0,1]$, so $p^{(0)} = \softmax(a)$ and
$p^{(1)} = \softmax(b)$.  By the fundamental theorem of calculus,
\[
\softmax(b)_i - \softmax(a)_i
= \int_0^1 \frac{d}{dt}\, p_i^{(t)}\, dt.
\]
The softmax Jacobian gives
$\frac{d}{dt}\, p_i^{(t)} = p_i^{(t)}(c_i - \mu_t)$,
where $\mu_t = \sum_j p_j^{(t)} c_j = \E_{p^{(t)}}[c]$.
Applying the triangle inequality inside the integral,
\[
\|\softmax(b) - \softmax(a)\|_1
\;\le\; \int_0^1
  \sum_i p_i^{(t)} |c_i - \mu_t|\, dt.
\]
The integrand is the mean absolute deviation
$\MAD(c;\, q) = \E_q[|c - \E_q[c]|]$ evaluated at
$q = p^{(t)}$.  Fix $t$ and write $q = p^{(t)}$,
$\mu = \mu_t$.  By Jensen's inequality
($\varphi(x) = x^2$ is convex),
$\MAD(c;\,q)^2 \le \Var_q(c)$.
By Popoviciu's inequality, for $c$ bounded in $[m, M]$,
$\Var_q(c) \le (M - m)^2 / 4$.
Combining,
\[
\MAD(c;\, q)
\;\le\; \sqrt{\Var_q(c)}
\;\le\; \frac{M - m}{2}
\;=\; \frac{\max_i c_i - \min_i c_i}{2}.
\]
Since $\max_i c_i \le \|c\|_\infty$ and
$-\min_i c_i \le \|c\|_\infty$, the range satisfies
$(\max_i c_i - \min_i c_i)/2 \le \|c\|_\infty$.
Substituting back,
\[
\|\softmax(b) - \softmax(a)\|_1
\;\le\; \int_0^1 \|c\|_\infty\, dt
\;=\; \|c\|_\infty
\;=\; \|a - b\|_\infty.
\qedhere
\]
\end{proof}

\end{document}